\theoremstyle{plain}
\newtheorem{theorem}{Theorem}
\theoremstyle{definition}
\newtheorem{proposition}[theorem]{Proposition}
\DeclareMathOperator*{\softmax}{softmax}
\DeclareMathOperator*{\argmax}{arg\,max}
\icmltitlerunning{Bayesian Generative Active Deep Learning}
\begin{document}

\twocolumn[
\icmltitle{Bayesian Generative Active Deep Learning}



\icmlsetsymbol{equal}{*}

\begin{icmlauthorlist}
\icmlauthor{Toan Tran}{adl}
\icmlauthor{Thanh-Toan Do}{liv}
\icmlauthor{Ian Reid}{adl}
\icmlauthor{Gustavo Carneiro}{adl}
\end{icmlauthorlist}

\icmlaffiliation{adl}{University of Adelaide, Australia}
\icmlaffiliation{liv}{University of Liverpool}

\icmlcorrespondingauthor{Toan Tran}{toan.m.tran@adelaide.edu.au}

\icmlkeywords{Bayesian inference, Bayesian Deep active learning, Data augmentation, Generative models}

\vskip 0.3in
]



\printAffiliationsAndNotice{}  


\begin{abstract}
	Deep learning models have demonstrated outstanding performance in several problems, but their training process tends to require immense amounts of computational and human resources for training and labeling, constraining the types of problems that can be tackled.
	Therefore, the design of effective training methods that require small labeled training sets is an important research direction that will allow a more effective use of resources.
	Among current approaches designed to address this issue, two are particularly interesting: data augmentation and active learning.  
	Data augmentation achieves this goal by artificially generating new training points, while active learning relies on the selection of the ``most informative'' subset of unlabeled training samples to be labelled by an oracle.
	Although successful in practice, data augmentation can waste computational resources because it indiscriminately generates samples that are not guaranteed to be informative, and active learning selects a small subset of informative samples (from a large un-annotated set) that may be insufficient for the training process. 
	In this paper, we propose a Bayesian generative active deep learning approach that combines active learning with data augmentation -- we provide theoretical and empirical evidence (MNIST, CIFAR-$\{10,100\}$, and SVHN) that our approach has more efficient training and better classification results than data augmentation and active learning. 
\end{abstract}

\section{Introduction}\label{sec:intro}

Deep learning is undoubtedly the dominant machine learning methodology~\cite{esteva2017dermatologist,huang2017densely,kumar2016ask,rajkomar2018scalable}.  Part of the reason behind this success lies in its training process that can be performed with immense and carefully labeled data sets, where the larger the data set, the more effective the training process~\cite{sun2017revisiting}.  However, the labeling of such large data sets demands significant human effort, and the large-scale training process requires considerable computational resources~\cite{sun2017revisiting}.  These training issues have prevented researchers and practitioners from solving a wider range of classification problems, where large labeled data sets are hard to acquire or vast computational resources are unavailable~\cite{litjens2017survey}. 
Addressing these issues is one of the most important problems to be solved in machine learning~\cite{bengio2012deep,gal2017deep,kingma2014semi,settles2012active,krizhevsky2012imagenet,tran2017bayesian,zhu2017generative}.

One of the most successful approaches to mitigate the issue described above relies on the use of a small labeled data set and a large unlabeled data set, where small subsets from the unlabeled set are automatically selected using an acquisition function that assesses how informative those subsets are for the training process.  These selected unlabeled subsets are then labeled by an oracle (i.e., a human annotator), integrated into the labeled data set, which is then used to re-train the model in an iterative training process.  This algorithm is known as (pool-based) active learning~\cite{settles2012active}, and its aim is to reduce the need for large labeled data sets and the computational requirements for training models because it tends to rely on smaller training sets.  Although effective in general, active learning may overfit the informative training sets due to their small sizes.

Alternatively, if the unlabeled data set does not exist, then a possible idea is to use the samples from the labeled set to guide the generation of new artificial training points by sampling from a generative distribution that is assumed to have a particular shape (e.g., Gaussian noise around rigid deformation parameters from the labels)~\cite{krizhevsky2012imagenet} or that have been estimated from a generative adversarial training~\cite{tran2017bayesian}.  This training process is known as data augmentation, which targets the reduction of the need for large labeled training sets.  However, given that the generation of new samples is done without regarding how informative the new sample is for the training process, it is likely that a large proportion of the generated samples will not be important for the training process.  Consequently, data augmentation wastes computational resources, forcing the training process not only to take longer than necessary, but also to be relatively ineffective, particularly at the later stages of the training process, when most of the generated points are likely to be uninformative.

\begin{figure*}[ht]
	\centering
	\includegraphics[width=0.97\textwidth]{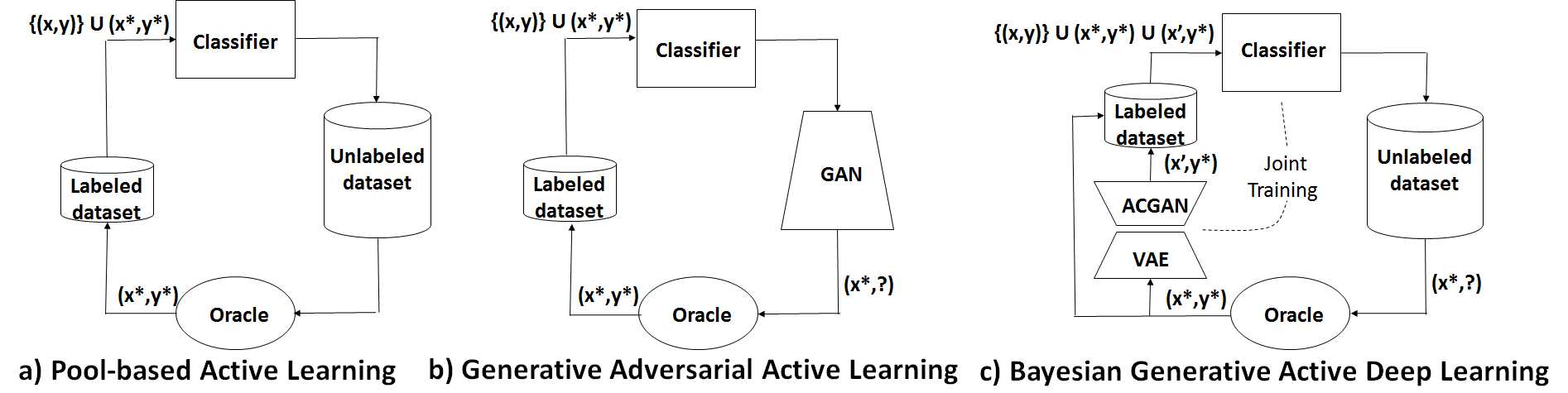}
	\caption{Comparison between (pool-based) active learning~\cite{settles2012active} (a), generative adversarial active learning~\cite{zhu2017generative} (b), and our proposed Bayesian generative active deep learning (c).  The labeled data set is represented by 
		$\{(\mathbf{x},\mathbf{y})\}$, the unlabeled point to be labeled by the oracle is denoted by $\mathbf{x}^*$ (oracle's label is $\mathbf{y}^{*}$), and the point generated by the VAE-ACGAN model is denoted by $\mathbf{x}'$.}
	\label{fig:motivation}
	\vskip -0.2in
\end{figure*}

In this paper, we propose a new Bayesian generative active deep learning method that targets the augmentation of the labeled data set with generated samples that are informative for the training process -- see Fig.~\ref{fig:motivation}.  
Our paper is motivated by the following works: query by synthesis active learning~\cite{zhu2017generative}, Bayesian data augmentation~\cite{tran2017bayesian}, auxiliary-classifier generative adversarial networks (ACGAN)~\cite{odena2017conditional} and variational autoencoder (VAE)~\cite{journals/corr/KingmaW13}.  We assume the existence of a small labeled and a large unlabeled data set, where we use the concept of Bayesian active learning by disagreement (BALD)~\cite{gal2017deep,houlsby2011bayesian} to select informative samples from the unlabeled data set.  These samples are then labeled by an oracle and processed by the VAE-ACGAN to produce new artificial samples that are as informative as the selected samples.  This set of new samples are then incorporated into the labeled data set to be used in the next training iteration.

Compared to a recently proposed generative adversarial active learning~\cite{zhu2017generative}, which relies on an optimization problem to generate new samples (this optimization balances sample informativeness with image generation quality), our approach has the advantage of using acquisition functions that have proved to be more effective~\cite{gal2017deep} than the simple information loss  in~\cite{zhu2017generative}.  Different from our approach that trains the generative and classification models jointly, the approach in~\cite{zhu2017generative} relies on a 2-stage training, where the generator training is independent of the classifier training.  A potential disadvantage of our method is the fact that the whole unlabeled data set needs to be processed by the acquisition function at each iteration, but that is mitigated by the fact that we can sample a much smaller (fixed-size) subset of the unlabeled data set to guarantee the informativeness of the selected samples~\cite{he2013imbalanced}.
An important question about the VAE-ACGAN generation process is how informative the generated artificial sample is, when compared with the active learning selected sample from the unlabeled training set.  We show that this generated sample is theoretically guaranteed to be informative, given a couple of assumptions that are empirically verified.  We run experiments which show that our proposed Bayesian generative active deep learning is advantageous in terms of training efficiency and classification performance, compared with data augmentation and active learning on MNIST, CIFAR-$\{10,100\}$ and SVHN.

\section{Related Work}\label{sec:related_work}

\subsection{Bayesian Active Learning}

In a (pool-based) active learning framework, the learner is initially modeled with a small labeled training set, and it will iteratively search for the ``most informative'' samples from a large unlabeled data set to be labeled by an oracle -- these newly labeled samples are then used to re-model the learner. The information value of an unlabeled sample is estimated by an acquisition function, which is maximized in order to select the most informative samples.  For example, the most informative samples can be selected by maximizing the ``expected informativeness''~\cite{mackay1992information}, or minimizing the ``expected error'' of the learner~\cite{cohn1996active} -- such acquisition functions are hard to optimize in deep learning because they require the estimation of the inverse of the Hessian computed from the expected error with respect to high-dimensional parameter vectors.

Recently, Houlsby et al.~\yrcite{houlsby2011bayesian} proposed the Bayesian active learning by disagreement (BALD) scheme in which the acquisition function is measured by the ``mutual information'' of the training sample with respect to the model parameters. Gal et al.~\yrcite{gal2017deep} pointed out that, in deep active learning, the evaluation of this function is based on model uncertainty, which in turn requires the approximation of the posterior distribution of the model parameters. These authors also introduced the use of Monte Carlo (MC) dropout method~\cite{gal2016dropout} to approximate this and other commonly used acquisition functions. This approach~\cite{gal2017deep} is shown to work well in practice despite the poor convergence of the MC approximation. In our proposed approach, we also use this method to approximate the BALD acquisition function in the active selection process. 

\subsection{Data Augmentation}
\label{sec:lit_rev_DA}

In active learning, it is assumed that a model can be trained with a small data set. That assumption is challenging in the estimation of a deep learning model since it often requires large labeled data sets to avoid over-fitting. One reasonable way to increase the labeled training set is with data augmentation that artificially generates new synthetic training samples~\cite{krizhevsky2012imagenet}.  Gal et al.~\yrcite{gal2017deep} also emphasized the importance of data augmentation for the development of deep active learning. Data augmentation can be performed with ``label-preserving'' transformations~\cite{krizhevsky2012imagenet,Simard2003,yaeger1996} -- this is known as ``poor's man'' data augmentation (PMDA)~\cite{tanner1991tools,tran2017bayesian}.  Alternatively, Bayesian data augmentation (BDA) trains a deep generative model (using the training set), which is then used to produce new artificial training samples~\cite{tran2017bayesian}. Compared to PMDA, BDA has been shown to have a better theoretical foundation and to be more beneficial in practice~\cite{tran2017bayesian}. One of the drawbacks of data augmentation is that the generation of new training points is driven by the likelihood that the generated samples belong to the training set -- this implies that the model produces samples that are likely to be close to the generative distribution mode.  Unfortunately, as the training process progresses, these points are the ones more likely to be correctly classified by classifier, and as a result they are not informative.
The combination of active learning and data augmentation proposed in this paper addresses the issue above, where the goal is to continuously generate informative training samples that not only are likely to belong to the learned generative distribution, but are also informative for the training process -- see Fig.~\ref{fig:lit_review}.

\begin{figure*}[ht]
	\centering
	\includegraphics[width=0.85\textwidth]{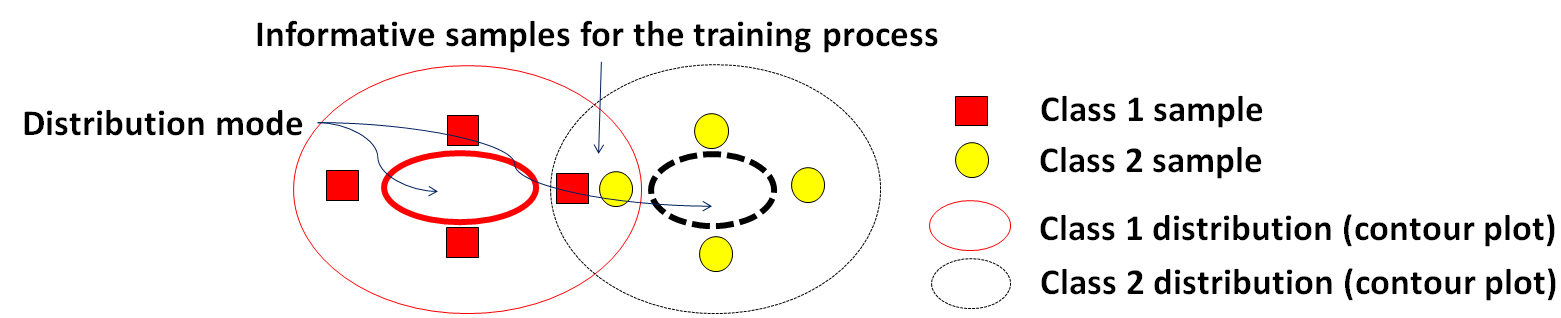}
	\caption{We target the generation of samples that belong to the generative distribution learned from the training set, and that are also informative for the training process. In particular, we aim to generate synthetic samples belonging to the intersection of different class distributions known as ``disagreement region''~\cite{settles2012active}. These generated instances are informative for the training process since the learning model is uncertain about them~\cite{houlsby2011bayesian}.}
	\label{fig:lit_review}
	\vskip -0.2in
\end{figure*}

\subsection{Generative Active Learning}

The training process in active learning can be significantly accelerated by actively generating informative samples. Instead of querying most informative instances from an unlabeled pool, Zhu \& Bento~\yrcite{zhu2017generative} introduced a generative adversarial active learning (GAAL) model to produce new synthetic samples that are informative for the current model. The major advantage of their algorithm is that it can generate rich representative training data with the assumptions that the GAN model has been pre-trained and the optimization during generation is solved efficiently. Nevertheless, this approach has a couple of limitations that make it challenging to be applied in deep learning. First, the 
acquisition function must be simple to compute and optimize (e.g., distance to classification hyper-plane) because it will be used by the generative model during the  sample generation process -- such simple acquisition functions have been shown to be not quite useful in active learning~\cite{gal2017deep}. Also, the GAN model in~\cite{zhu2017generative} is not fine-tuned as  training progresses since it is pre-trained only once before generating new instances --  as a result, the generative and discriminative models do not ``co-evolve''. 

In contrast, following the standard active learning, our Bayesian Generative Active Deep Learning first queries the unlabeled data set samples based on their ``information content'', and conditions the generation of a new synthetic sample on this selected sample. Moreover, the learner and the generator are jointly trained in our approach, allowing them to ``co-evolve'' during training. We show empirically that, in our proposed approach, a synthetic sample generated from the most informative sample belongs to its sufficiently small neighborhood. More importantly, the value of the acquisition function at the generated sample is mathematically shown to be closed to its optimal value (at the most informative sample), and the synthetic instance, therefore, can also be considered to be informative.

\subsection{Variational Autoencoder Generative Adversarial Networks}

Generative Adversarial Network (GAN)~\cite{goodfellow2014generative} is one of the most studied deep learning models. GANs typically contain two components: a generator that learns to map a latent variable to a sample data, and a discriminator that aims to guide the generator to produce realistically looking samples. The generative performance of GAN is often evaluated by both the quality and the diversity of the synthetic instances. There have been several extensions proposed to improve the quality of the GAN generated images, such as CGAN~\cite{Mirza} and ACGAN~\cite{odena2017conditional}. In order to tackle the low diversity problem (known as ``mode collapse''), Larsen et al.~\yrcite{pmlr-v48-larsen16} introduced a variational autoencoder generative adversarial network (VAE-GAN) that combines a variational autoencoder (VAE)~\cite{journals/corr/KingmaW13} and a GAN in which these networks are connected by a generator/decoder~\cite{zhang2017gans}. We utilize both ACGAN and VAE-GAN  in our proposed Bayesian Generative Active Deep Learning framework, but with the aim of improving the classification performance.  

\section{``Information-Preserving'' Data Augmentation for Active Learning}\label{sec:da_framework}

\subsection{Bayesian Active Learning by Disagreement (BALD)}
\label{sec:bald}

Let us denote the initial labeled data by $\mathcal D=\{(\mathbf{x}_i, \mathbf{y}_i)\}_{i=1}^N$, where $\mathbf{x}_i \in \mathcal X \subseteq \mathbb R^{d}$ is the data sample labeled with $\mathbf{y}_i \in \mathcal C = \{1, 2, \ldots, C\}$ ($C=$ \# classes). By using Bayesian deep learning framework, we can obtain an estimate of the posterior of the parameters $\mathbf\theta$ of the model $\mathcal M$ given $\mathcal D$, namely $p(\mathbf\theta | \mathcal D)$.  In Bayesian Active Learning by Disagreement (BALD) scheme~\cite{houlsby2011bayesian}, the most informative sample $\mathbf{x}^{*}$ is selected from the (unlabeled) pool data $\mathcal D_{\text{pool}}$ by~\cite{houlsby2011bayesian}:  
\begin{align}\label{acq_rule}
\mathbf{x}^{*} & = \argmax_{\mathbf{x} \in \mathcal D_{\text{pool}}}a(\mathbf{x}, \mathcal M) \notag\\
& = \argmax_{\mathbf{x} \in \mathcal D_{\text{pool}}} H[\mathbf{y} | \mathbf{x}, \mathcal D] - \mathbb E_{\mathbf\theta \sim p(\mathbf\theta | \mathcal D)}[H[\mathbf{y} | \mathbf{x}, \mathbf\theta]],
\end{align}
where $a(\mathbf{x}, \mathcal M)$ is the acquisition function, $H[\mathbf{y} | \mathbf{x}, \mathcal D]$ and $H[\mathbf{y} | \mathbf{x}, \mathbf\theta]$ are represented by the Shannon entropy~\cite{shannon2001mathematical} of the prediction $p(\mathbf{y} | \mathbf{x}, \mathcal D)$ and the distribution $p(\mathbf{y} | \mathbf{x}, \mathbf\theta)$, respectively.  The sample $\mathbf{x}^{*}$ is labeled with $\mathbf{y}^{*}$ (by an oracle), and the labeled data set is updated for the next training iteration: $\mathcal D \leftarrow \mathcal D \cup {(\mathbf{x}^{*}, \mathbf{y}^{*})}$.  That active selection framework is repeated until convergence.

In order to estimate the acquisition function in~\eqref{acq_rule}, Gal et al.~\yrcite{gal2017deep} introduced the Monte Carlo (MC) dropout method. This objective function can be approximated by its sample mean~\cite{gal2017deep}:
\begin{multline}\label{acq_approx}
a(\mathbf{x},\mathcal M) \approx 
-\sum_c \left( \frac{1}{T}\sum_t \hat p_c^t\right) \log \left( \frac{1}{T}\sum_t \hat p_c^t\right) \\ 
+\frac{1}{T}\sum_{c, t}\hat p_c^t \log \hat p_c^t, 
\end{multline}
where $T$ is the number of dropout iterations, $\hat {\mathbf{p}}^t =[\hat p^t_1, \ldots, \hat p^t_C] = \softmax (f(\mathbf{x};\mathbf\theta^t))$, with $f$ representing the network function parameterized by $\mathbf\theta^t$ that is sampled from an estimate of the (commonly intractable) posterior $p(\mathbf\theta | \mathcal D)$ at the $t$-th iteration.

\subsection{Generative Model and Bayesian Data Augmentation}
\label{sec:bda}

In the iterative Bayesian data augmentation (BDA) framework~\cite{tran2017bayesian}, each iteration consists of two steps: synthetic data generation and model training.  The BDA model comprises a generator (that generates new training samples from a latent variable), a discriminator (that discriminates between real and fake samples) and a classifier (that classifies the samples into one of the classes in $\mathcal C$). At the first step, given a latent variable $\mathbf{u}$ (e.g., a multivariate Gaussian variable) and a class label $\mathbf{y} \in \mathcal C$, the generator represented by a function $g(.)$ maps the tuple $(\mathbf{u}, \mathbf{y})$ to a data point $\mathbf{x}^a = g(\mathbf{u},\mathbf{y}) \in \mathcal X$, and $(\mathbf{x}^a, \mathbf{y})$ is then added to $\mathcal D$ for model training. In~\cite{tran2017bayesian}, the authors also showed a weak convergence proof that is related to the improvement of the posterior distribution $p(\mathbf\theta |\mathcal D)$. 

\subsection{Bayesian Generative Active Deep Learning}\label{sec:bgadl}

The main technical contribution of this paper consists of combining BALD and BDA for generating new labeled samples that are informative for the training process (see Fig.~\ref{fig:vaeacgan}).

\begin{figure*}[ht]
	\centering
	\includegraphics[width=0.85\textwidth]{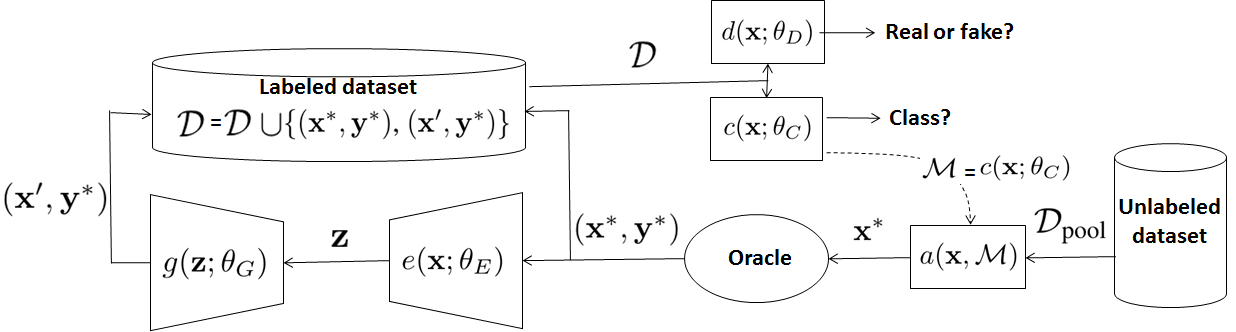}
	\caption{Network architecture of our proposed model.}
	\label{fig:vaeacgan}
	\vskip -0.15in
\end{figure*}

We modify BDA~\cite{tran2017bayesian} by conditioning the generation step on a sample $\mathbf{x}$ and a label $\mathbf{y}$ (instead of the latent variable $\mathbf{u}$ and label $\mathbf{y}$ in BDA). 
More specifically, the most informative sample $\mathbf{x}^{*}$ selected by solving~\eqref{acq_rule} using the estimation~\eqref{acq_approx} is pushed to go through a variational autoencoder (VAE)~\cite{journals/corr/KingmaW13}, which contains an encoder $e(.)$ and a decoder $g(.)$, in order to generate the sample $\mathbf{x}'$, as follows:
\begin{equation}\label{eq:gen_spl}
\mathbf{x}' = g(e(\mathbf{x}^{*})).
\end{equation}

\begin{figure}[ht]
	\vskip 0.2in
	\centering
	\includegraphics[width=0.35\textwidth]{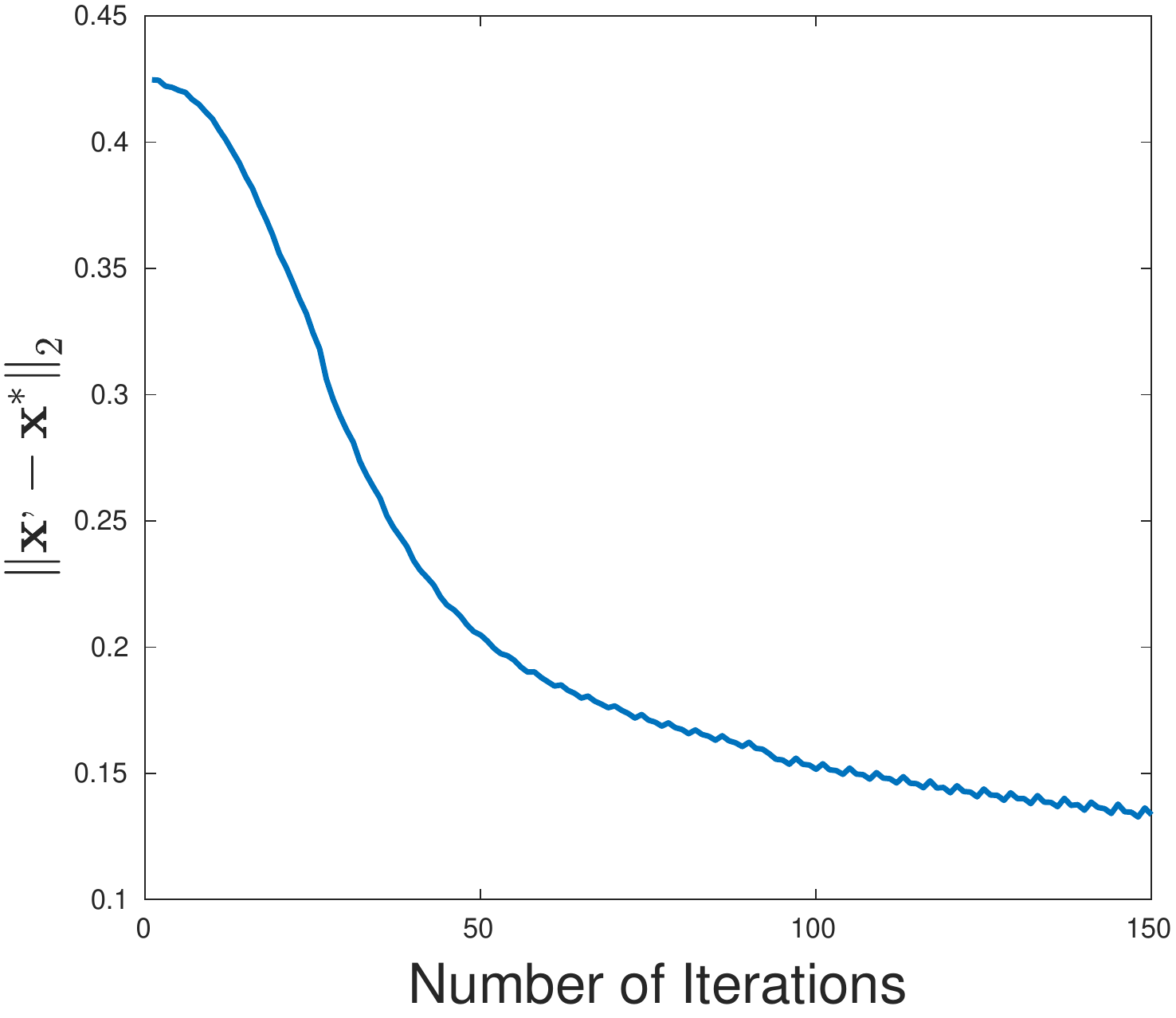}
	\caption{Reduction of $\|\mathbf{x}' - \mathbf{x}^{*}\|$ as the training of the VAE model progresses (on CIFAR-100 using ResNet-18).}
	\label{fig:reconstruction_loss}
	\vskip -0.2in
\end{figure}

The training process of a VAE is performed by minimizing the ``reconstruction loss'' $\mathcal \ell(\mathbf{x}^{*}, g(e(\mathbf{x}^{*}))$~\cite{journals/corr/KingmaW13}, where if the number of training iterations is sufficiently large, we have:
\begin{equation}\label{eq:point_appx}
\|\mathbf{x}' - \mathbf{x}^{*}\| < \varepsilon,
\end{equation}
with $\varepsilon$ representing an arbitrarily small positive constant -- see Fig.~\ref{fig:reconstruction_loss} for an evidence for that claim.  

The label of $\mathbf{x}'$ is assumed to be $\mathbf{y}^{*}$ (i.e., the oracle's label for $\mathbf{x}^{*}$) and the current labeled data set is then augmented with $(\mathbf{x}^{*}, \mathbf{y}^{*})$ and $(\mathbf{x}', \mathbf{y}^{*})$, which are used for the next training iteration. 
To evaluate the ``information content'' of the generated sample $\mathbf{x}'$, which is measured by the value of the acquisition function at that point, $a(\mathbf{x}', \mathcal M)$, we consider the following proposition.

\begin{proposition}\label{propos_inf}
	Assuming that there exists the gradient of the acquisition function $a(\mathbf{x}, \mathcal{M})$ with respect to the variable $\mathbf{x}$, namely $\nabla_x a(\mathbf{x}, \mathcal M)$, and that $\mathbf{x}^{*}$ is an interior point of $\mathcal D_{\text{pool}}$, then $a(\mathbf{x'}, \mathcal M) \approx a(\mathbf{x}^{*}, \mathcal M)$ (i.e., the absolute difference between these values are within a certain range). Consequently, the sample $\mathbf{x}'$ generated from the most informative sample $\mathbf{x}^{*}$ by ~\eqref{eq:gen_spl} is also informative.
\end{proposition}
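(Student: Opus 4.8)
The plan is to control the quantity $a(\mathbf{x}',\mathcal M) - a(\mathbf{x}^{*},\mathcal M)$ by a first-order Taylor (mean value) argument for the acquisition function around the selected sample $\mathbf{x}^{*}$, exploiting the smallness of $\|\mathbf{x}' - \mathbf{x}^{*}\|$ that is guaranteed by~\eqref{eq:point_appx}. First I would use the interior-point hypothesis: since $\mathbf{x}^{*}$ is an interior point of $\mathcal D_{\text{pool}}$, there is an open ball $B(\mathbf{x}^{*},r)$ contained in the domain on which $a(\cdot,\mathcal M)$ is defined and differentiable. By taking the number of VAE training iterations large enough that the reconstruction error $\varepsilon$ in~\eqref{eq:point_appx} satisfies $\varepsilon<r$, the generated point $\mathbf{x}'$ together with the entire segment joining it to $\mathbf{x}^{*}$ stays inside this ball, so the expansion is legitimate and we never step outside the region where $\nabla_x a$ exists.

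Next I would invoke the mean value theorem for the scalar function $a$: there is a point $\boldsymbol\xi$ on the segment $[\mathbf{x}^{*},\mathbf{x}']$ with
\begin{equation}
a(\mathbf{x}', \mathcal M) - a(\mathbf{x}^{*}, \mathcal M) = \nabla_x a(\boldsymbol\xi)^{\top}(\mathbf{x}' - \mathbf{x}^{*}).
\end{equation}
Applying the Cauchy--Schwarz inequality and then the bound~\eqref{eq:point_appx} yields
\begin{equation}
\bigl| a(\mathbf{x}', \mathcal M) - a(\mathbf{x}^{*}, \mathcal M) \bigr| \le \|\nabla_x a(\boldsymbol\xi)\|\,\|\mathbf{x}' - \mathbf{x}^{*}\| < \|\nabla_x a(\boldsymbol\xi)\|\,\varepsilon.
\end{equation}
Because the gradient is continuous on the closed ball $\overline{B}(\mathbf{x}^{*},r)$, its norm is bounded there by a finite constant $L$, so the right-hand side is at most $L\varepsilon$. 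Since $\varepsilon$ can be made arbitrarily small by further VAE training, this gives $a(\mathbf{x}', \mathcal M) \approx a(\mathbf{x}^{*}, \mathcal M)$ in the precise sense that their absolute difference is controlled by the product of the local gradient bound and the reconstruction error.

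To finish, I would recall that $\mathbf{x}^{*}$ is the \emph{maximizer} of $a(\cdot,\mathcal M)$ over the pool by~\eqref{acq_rule}, so $a(\mathbf{x}^{*},\mathcal M)$ is essentially the largest attainable acquisition value; the estimate above then places $a(\mathbf{x}',\mathcal M)$ within $L\varepsilon$ of this optimum, and hence $\mathbf{x}'$ inherits the informativeness of $\mathbf{x}^{*}$. The main obstacle I anticipate is turning the qualitative phrase ``within a certain range'' into a quantitatively meaningful claim: the bound $L\varepsilon$ is only useful when the local Lipschitz constant $L$ of $a$ near $\mathbf{x}^{*}$ is not large relative to $1/\varepsilon$, so the argument really asserts that the two values can be driven together by shrinking $\varepsilon$ rather than that they coincide for a fixed model. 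A sharper conclusion ($o(\varepsilon)$ rather than $O(\varepsilon)$) would follow if one could additionally argue that $\mathbf{x}^{*}$ is a \emph{local} maximizer of $a$ in the continuous space $\mathcal X$, since then $\nabla_x a(\mathbf{x}^{*})=0$ and the first-order term vanishes; I would treat this with care, however, because~\eqref{acq_rule} is an $\argmax$ over the \emph{discrete} pool rather than over a continuum, and so I would lean on the gradient-bound argument unless the stronger stationarity can be independently justified.
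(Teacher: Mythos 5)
Your proof is correct, but it takes a genuinely different route from the paper's. The paper's argument uses the interior-point hypothesis to assert that $\mathbf{x}^{*}$, being the maximizer in~\eqref{acq_rule}, is a \emph{local maximum} of $a(\cdot,\mathcal M)$ and hence a critical point, $\nabla_x a(\mathbf{x}^{*}, \mathcal M)=\mathbf{0}$; it then applies the first-order Taylor expansion at $\mathbf{x}^{*}$, in which the linear term vanishes identically, so $a(\mathbf{x}',\mathcal M)\approx a(\mathbf{x}^{*},\mathcal M)$ with only the (higher-order) Taylor remainder as error. You instead avoid the stationarity claim entirely: your mean-value-theorem argument with a local gradient bound $L$ gives $|a(\mathbf{x}',\mathcal M)-a(\mathbf{x}^{*},\mathcal M)|\le L\varepsilon$, a weaker $O(\varepsilon)$ estimate compared to the paper's $o(\varepsilon)$, but one that holds without assuming $\mathbf{x}^{*}$ is a critical point. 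Notably, the concern you raise in your closing paragraph --- that \eqref{acq_rule} is an $\argmax$ over a \emph{discrete} pool, so stationarity of $a$ in the continuous variable $\mathbf{x}$ does not automatically follow --- is precisely the step the paper leans on; the paper's interior-point assumption only delivers $\nabla_x a(\mathbf{x}^{*},\mathcal M)=\mathbf{0}$ if one reads $\mathcal D_{\text{pool}}$ as a continuum rather than a finite sample set. So your version trades sharpness for robustness: the paper's proof is tighter when its stationarity premise is granted, while yours remains valid under the more realistic discrete-pool reading, at the cost of introducing a mild extra regularity requirement (continuity of $\nabla_x a$ on a compact neighborhood, needed to obtain the finite bound $L$, which mere existence of the gradient does not supply).
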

\begin{proof}
	Given the assumptions of Proposition~\ref{propos_inf}, and due to the fact that $\mathbf{x}^{*}$ is a local maximum of function $a(\mathbf{x}, \mathcal{M})$~\eqref{acq_rule}, then $\mathbf{x}^{*}$ is a critical point of $a(\mathbf{x}, \mathcal M)$, i.e.,
	\begin{equation}\label{eq:crit_point}
	\nabla_x a(\mathbf{x}^{*}, \mathcal M) = \mathbf{0}.
	\end{equation}
	Condition~\eqref{eq:point_appx}, which is empirically verified by Fig.~\ref{fig:reconstruction_loss}, indicates that $\mathbf{x'}$ belongs to a sufficiently small neighborhood of $\mathbf{x}^{*}$. Therefore, by  using the first order Taylor approximation of the function $a(\mathbf{x'}, \mathcal M)$ at the point $\mathbf{x}^{*}$ and ~\eqref{eq:crit_point}, we obtain
	\begin{align}\label{taylor_app}
	a(\mathbf{x}', \mathcal M) & \approx a(\mathbf{x}^{*}, \mathcal M) + \nabla_x a(\mathbf{x}^{*}, \mathcal M)^{T}(\mathbf{x}' - \mathbf{x}^{*}) \notag\\
	& \approx a(\mathbf{x}^{*}, \mathcal M), 
	\end{align}  
	where $T$ denotes the transpose operator. 
	Thus, the synthetic sample $\mathbf{x}'$ can also be considered informative.
\end{proof}

\section{Implementation}\label{sec:implementation}

Our network, depicted in Fig.~\ref{fig:vaeacgan}, comprises four components: a classifier $c(\mathbf{x};\theta_C)$, an encoder $e(\mathbf{x};\theta_E)$, a decoder/generator $g(\mathbf{z};\theta_G)$ and a discriminator $d(\mathbf{x};\theta_D)$. The classifier $c(.)$ can be represented  by any modern deep convolutional neural network classifier~\cite{lecun1998gradient,ResNet,he2016identity}, making our model quite flexible in the sense that we can use the top-performing classifier available in the field.
Also, the generative part of the model is based on ACGAN~\cite{odena2017conditional} and VAE-GAN~\cite{pmlr-v48-larsen16}, where the VAE decoder is also the generator of the GAN model --  our model is referred to as VAE-ACGAN.

The VAE-GAN loss function~\cite{pmlr-v48-larsen16,zhang2017gans} was formed by adding the reconstruction error in VAE to the GAN loss in order to penalize both \emph{unrealisticness} and \emph{mode collapse} in GAN training. Following that, the VAE-ACGAN loss of our proposed model is defined by
\begin{align}\label{vae_acgan_loss}
\mathcal L = \mathcal{L}_{\text{VAE}} + \mathcal{L}_{\text{ACGAN}},
\end{align}
with the VAE loss~\cite{journals/corr/KingmaW13,pmlr-v48-larsen16} represented by a combination of the reconstruction loss $\mathcal{L}_{\text{rec}}$ and the regularization prior $\mathcal{L}_{\text{prior}}$, i.e., 
\begin{align}\label{vae_loss}
& \mathcal{L}_{\text{VAE}} = \mathcal{L}_{\text{rec}}+ \mathcal{L}_{\text{prior}} \notag \\
&  = \mathcal{L}_{\text{rec}}(\mathbf{x}, g(e(\mathbf{x}; \mathbf{\theta}_E); \mathbf{\theta}_G) + D_{\text{KL}}(q(\mathbf{z}|\mathbf{x})\| p(\mathbf{z})),
\end{align}
where  $\mathbf{z} = e(\mathbf{x};\theta_E)=q(\mathbf{z}|\mathbf{x})$, $p(\mathbf{z})$ is the prior distribution of $\mathbf{z}$ (e.g., $\mathcal N(\mathbf{0}, \mathbf{I})$) and $D_{\text{KL}}(q \| p) = \int q\log \dfrac{p}{q}$ denotes the Kullback-Leibler divergence operator. The ACGAN loss~\cite{odena2017conditional} in (\ref{vae_acgan_loss}) is computed by 
\begin{align}\label{acgan_loss}
& \mathcal{L}_{\text{ACGAN}}  = \log (d(\mathbf{x}; \mathbf{\theta}_D)) + \log (1-d(g(\mathbf{z}; \mathbf{\theta}_G); \mathbf{\theta}_D)) \notag\\
& + \log (1-d(g(\mathbf{u}; \mathbf{\theta}_G); \mathbf{\theta}_D)) + \log (\softmax(c(\mathbf{x}; \mathbf{\theta}_C))) \notag \\
& + \log (\softmax(c(g(\mathbf{z}; \mathbf{\theta}_G); \mathbf{\theta}_C))) \notag\\
& +\log (\softmax(c(g(\mathbf{u}; 
\mathbf{\theta}_G); \mathbf{\theta}_C))),
\end{align}
where $\mathbf{u} \sim \mathcal N(\mathbf{0}, \mathbf{I})$.
The training process of the VAE-ACGAN network is presented in Algorithm~\ref{alg_BGAL}.   

\begin{algorithm}[tb]
	\caption{Bayesian Generative Active Learning}\label{alg_BGAL}
	\begin{algorithmic}
		\STATE Initialize network parameters $\mathbf{\theta}_E, \mathbf{\theta}_G, \mathbf{\theta}_C, \mathbf{\theta}_D$, and pre-train the classifier $c(\mathbf{x};\theta_C)$ with $\mathcal D$
		\REPEAT 
		\STATE Pick the most informative $\mathbf{x^{*}}$ from $\mathcal D_{\text{pool}}$ with $\mathbf{x}^{*}=\arg\max_{\mathbf{x}\in\mathcal D_{\text{pool}}}a(\mathbf{x},\mathcal M)$ in~\eqref{acq_rule} and~\eqref{acq_approx}, where $\mathcal M$ is represented by the classifier $c(\mathbf{x};\theta_C)$; 
		\STATE Request the oracle to label the selected sample, which forms $(\mathbf{x}^{*}, \mathbf{y}^{*})$
		\STATE $\mathbf{z} \gets e(\mathbf{x^{*}};\theta_E)$
		\STATE $\mathcal L_{\text{prior}} \gets D_{\text{KL}}(q(\mathbf{z} | \mathbf{x}^{*}) \| p(\mathbf{z}))$ 
		\STATE $\mathbf{x}' = g(e(\mathbf{x}^{*});\theta_G)$
		\STATE $\mathcal{L}_{\text{rec}} \gets \mathcal{L}_{\text{rec}}(\mathbf{x}^{*}, \mathbf{x}')$
		\STATE Sample $\mathbf{u} \sim \mathcal N(\mathbf{0}, \mathbf{I})$
		\STATE $\mathcal L_{\text{ACGAN}} \gets \log(d(\mathbf{x}^{*})) + \log(1-d(\mathbf{x}')) + \log(1-d(g(\mathbf{u})))+\log(\softmax(c(\mathbf{x}^{*})))+\log(\softmax(c(\mathbf{x}')))+ \log(\softmax(c(g(\mathbf{u}))))$
		\STATE $\mathbf\theta_E \gets \mathbf\theta_E-\nabla_{\mathbf\theta_E}(\mathcal L_{\text{rec}} + \mathcal L_{\text{prior}})$
		\STATE $\mathbf\theta_G \gets \mathbf\theta_G-\nabla_{\mathbf\theta_G}(\gamma\mathcal L_{\text{rec}}-\mathcal L_{\text{ACGAN}})$ (parameter $\gamma=0.75$~\cite{pmlr-v48-larsen16} in our experiments)
		\STATE $\mathbf\theta_D \gets \mathbf\theta_D-\nabla_{\mathbf\theta_D}\mathcal L_{\text{ACGAN}}$
		\STATE $\mathbf\theta_C \gets \mathbf\theta_C-\nabla_{\mathbf\theta_C}\mathcal L_{\text{ACGAN}}$
		\UNTIL{convergence}
	\end{algorithmic}
\end{algorithm}

\section{Experiments and Results}\label{sec:experiment}

In this section, we assess quantitatively our proposed Bayesian Generative Active Deep Learning in terms of classification performance measured by the top-1 accuracy \footnote{code will be available soon}. 
In particular, our proposed algorithm, active learning using ``information-preserving'' data augmentation (AL w. VAEACGAN) is compared with active learning using BDA (AL w. ACGAN), BALD without using data augmentation (AL without DA), BDA without active learning (BDA)~\cite{tran2017bayesian} (using full and partial training sets), and random selection as a function of the number of acquisition iterations and the percentage of training samples.
Our experiments are performed on MNIST~\cite{lecun1998gradient}, CIFAR-10, CIFAR-100~\cite{krizhevsky2012imagenet}, and SVHN~\cite{netzer2011reading}.
MNIST~\cite{lecun1998gradient} contains handwritten digits, (with $60,000$ training and $10,000$ testing samples, and 10 classes), CIFAR-10~\cite{krizhevsky2012imagenet} is composed of $32 \times 32$ color images (with $50,000$ training and $10,000$ testing samples, and $10$ classes), CIFAR-100~\cite{krizhevsky2012imagenet} is similar to CIFAR-10, but with $100$ classes, and SVHN~\cite{netzer2011reading} contains $32 \times 32$ street view house numbers (with $73257$ training samples and $26032$ testing samples, and $10$ classes).

Given that our approach can use any classifier, we test it with ResNet18~\cite{ResNet} and ResNet18pa~\cite{he2016identity}, which have shown to produce competitive classification results in several tasks.
The sample acquisition setup for each data set is: 1) the number of  samples in the initial training set is $1,000$ for MNIST, $5,000$ for CIFAR-10, $15,000$ for CIFAR-100, and $10,000$ for SVHN (the initial data set percentage was empirically set -- with values below these amounts, we could not make the training process converge); 2) the number of acquisition iterations is $150$ ($50$ for SVHN), where at each iteration $100$ ($500$ for SVHN) samples are selected from $2,000$ randomly selected samples of the unlabeled data set $\mathcal D_{\text{pool}}$ (this fixed number of randomly selected samples almost certainly contains the most informative sample~\cite{he2013imbalanced}).
The training process was run with the following hyper-parameters: 1) the classifier $c(\mathbf{x}; \theta_C)$ used stochastic gradient descent with (lr=0.01, momentum=0.9); 2) the encoder $e(\mathbf{x};\theta_E)$, generator $g(\mathbf{z};\theta_G)$ and discriminator $d(\mathbf{x}; \theta_D)$ used Adam optimizer with (lr=0.0002, $\beta_1=0.5$, $\beta_2=0.999$); the mini-batch size is $100$ for all cases.

\begin{figure*}[th]
	\centering
	{\includegraphics[width=0.24\textwidth]{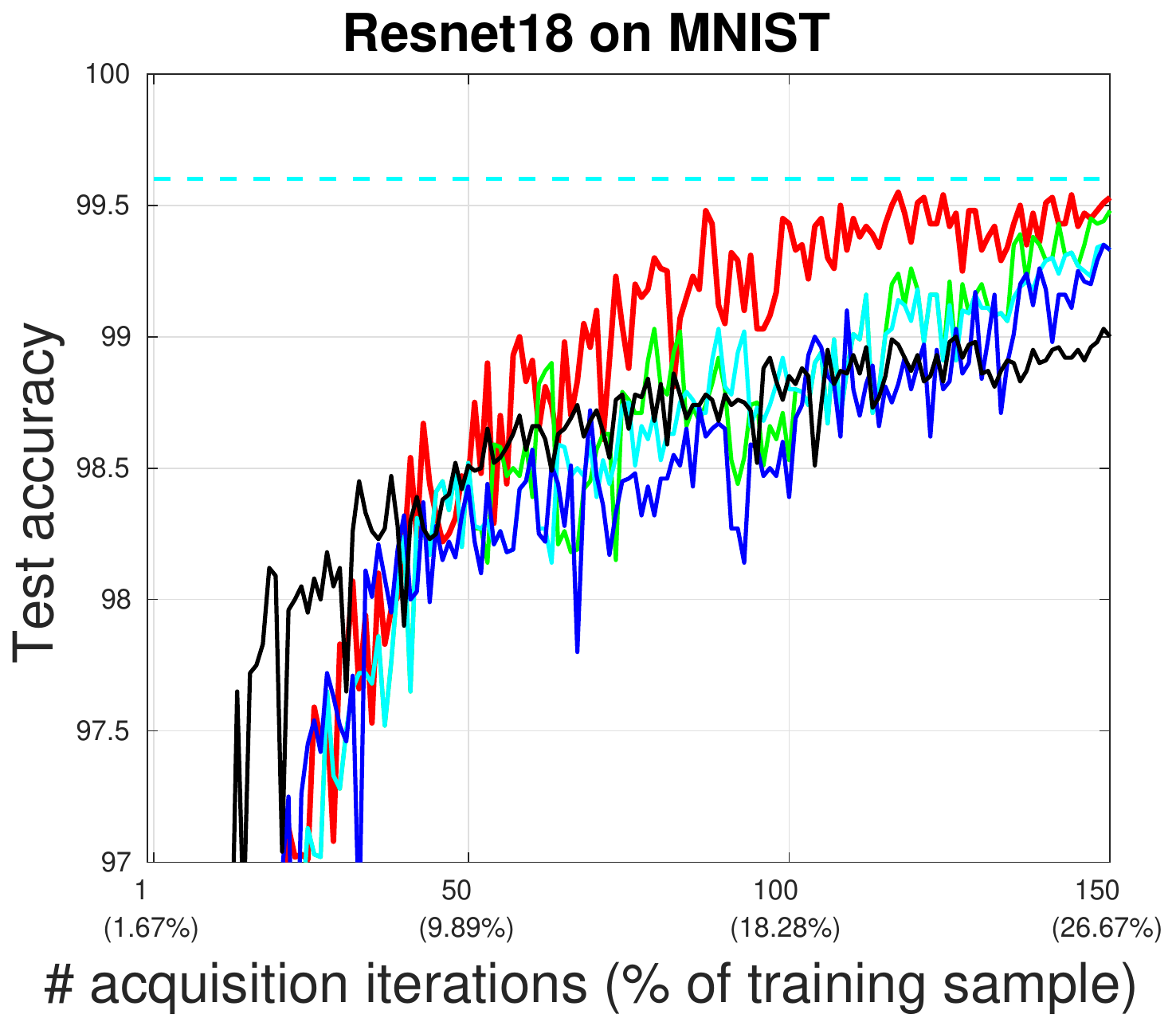}}
	{\includegraphics[width=0.24\textwidth]{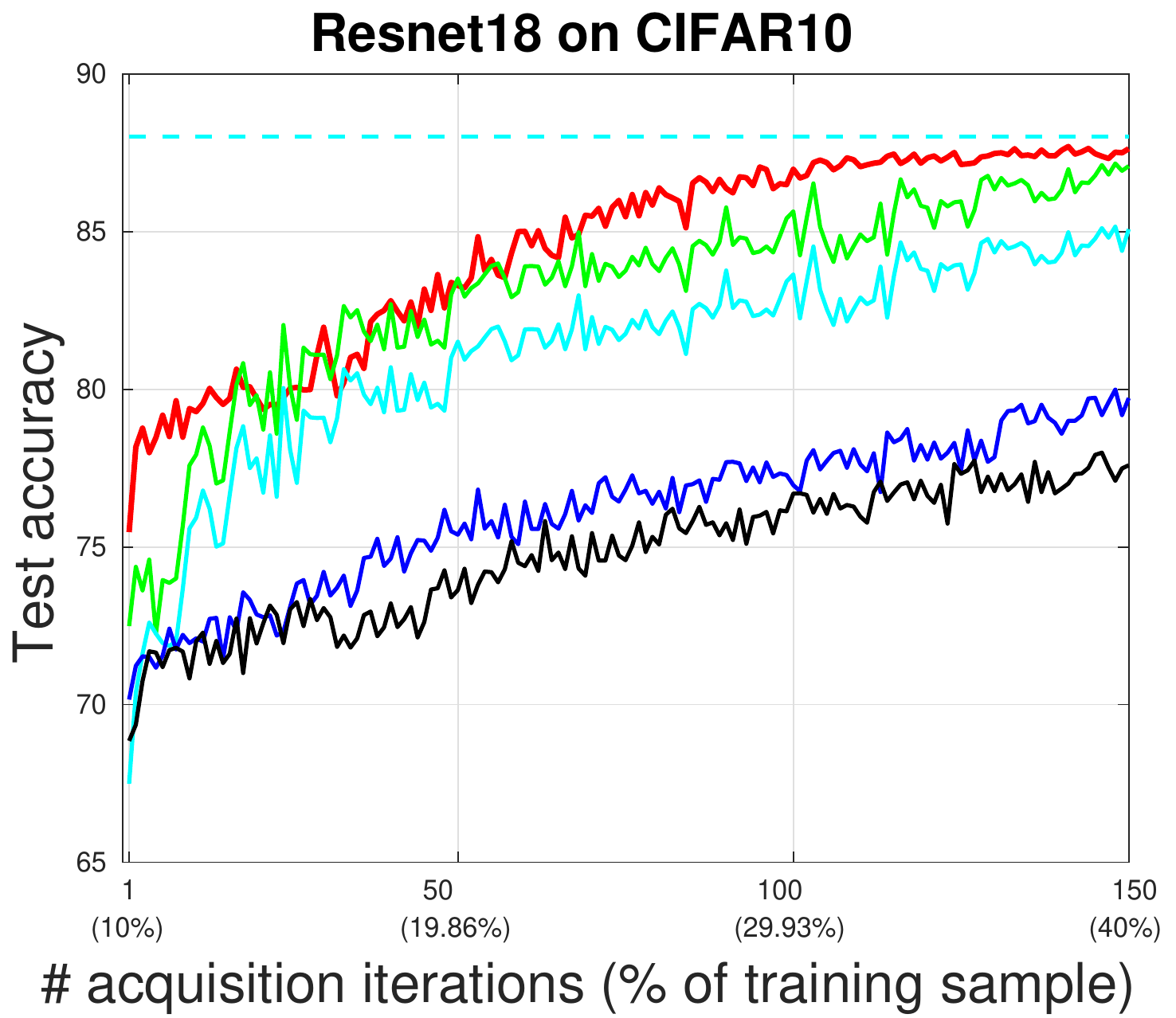}}
	{\includegraphics[width=0.24\textwidth]{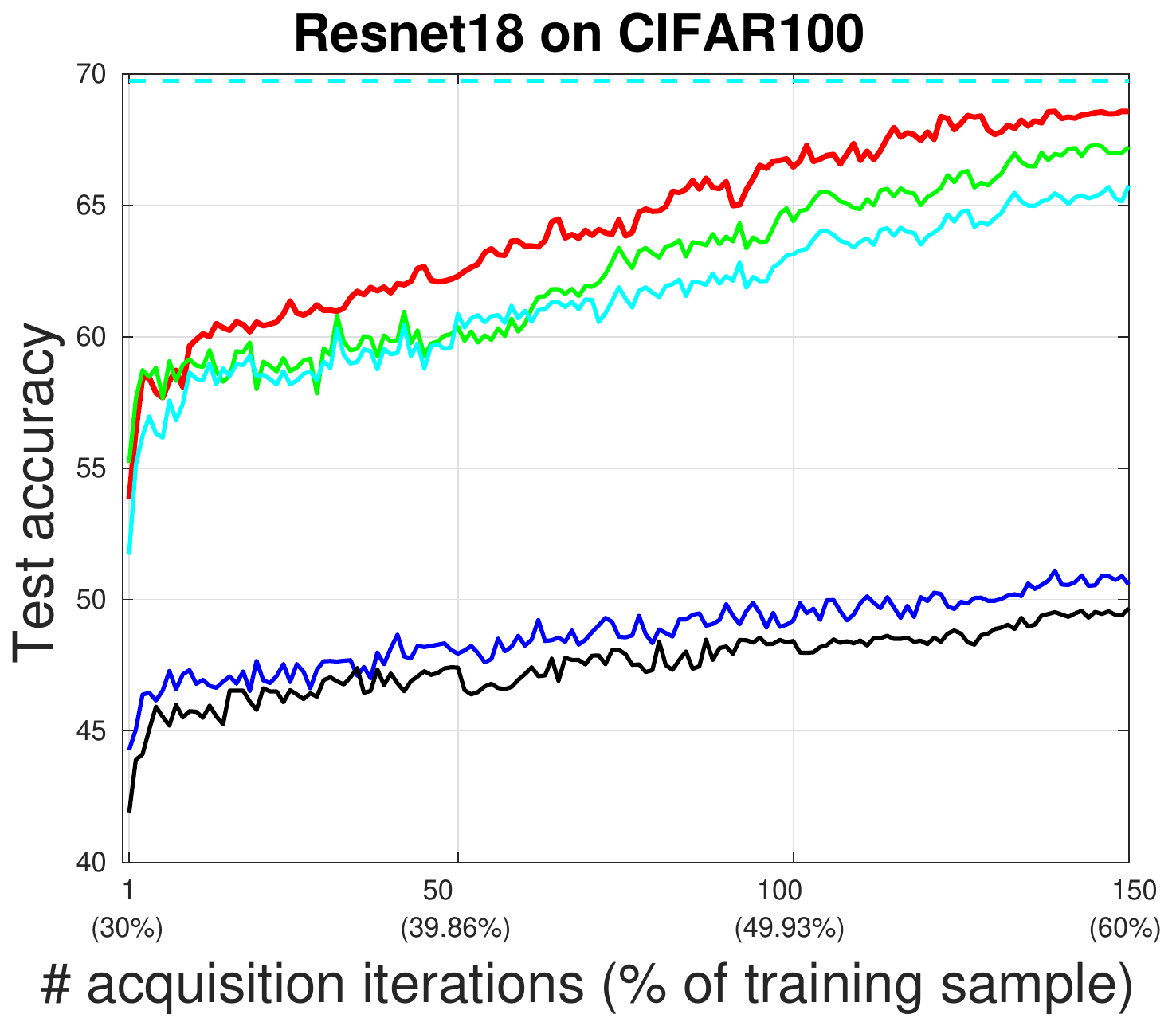}}
	{\includegraphics[width=0.24\textwidth]{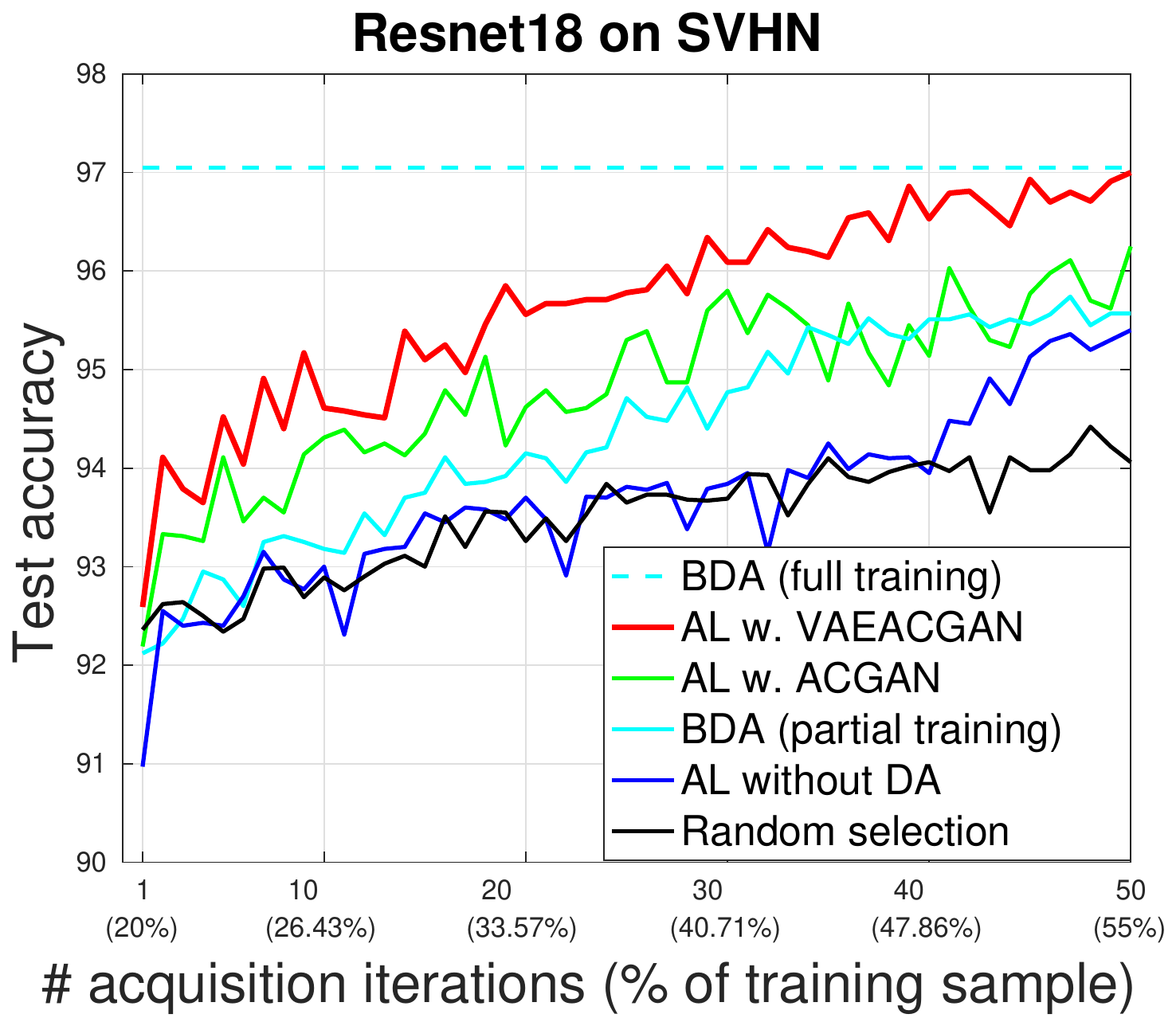}}\\
	\subfigure[MNIST]{\includegraphics[width=0.24\textwidth]{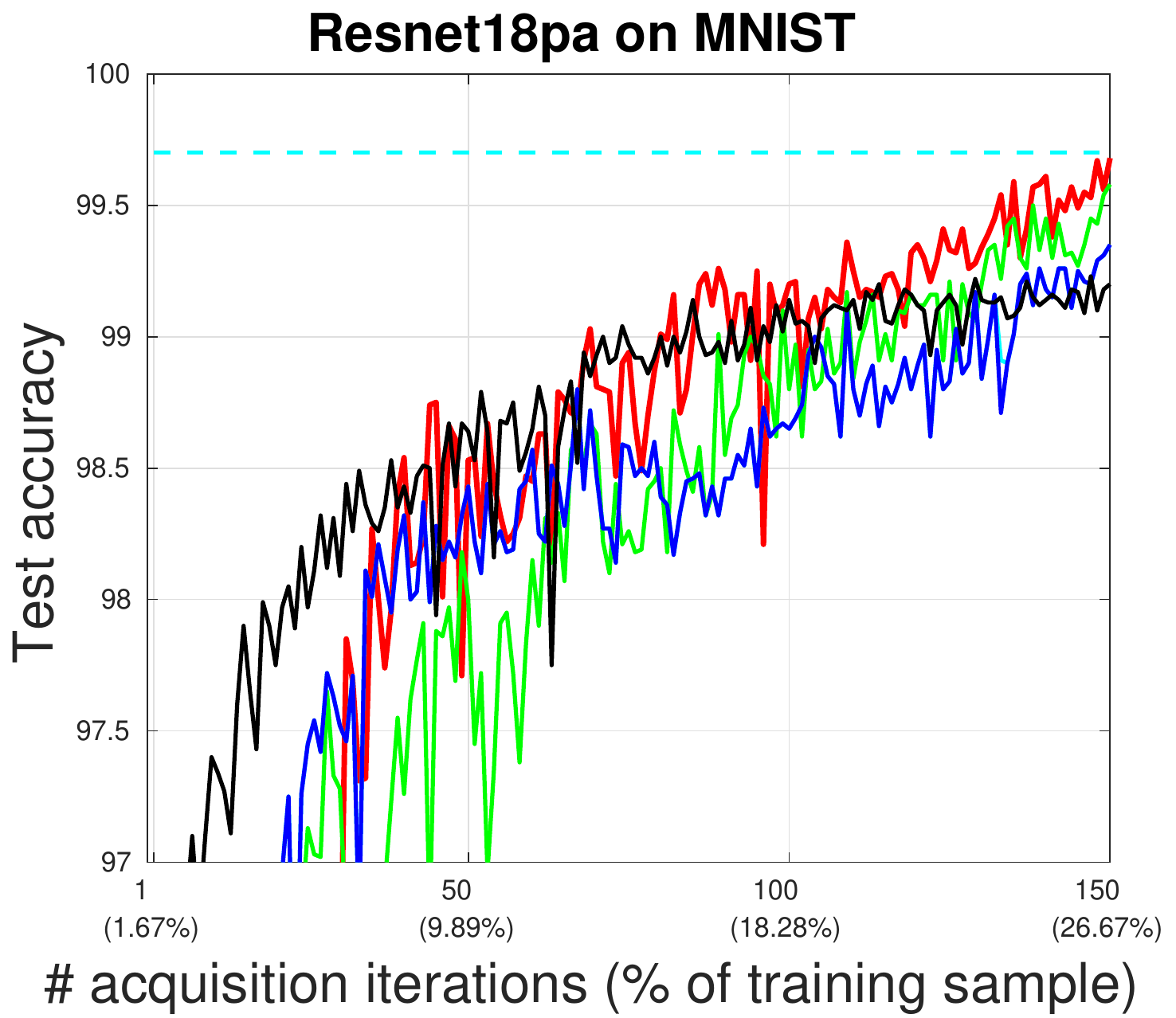}}
	\subfigure[CIFAR-10]{\includegraphics[width=0.24\textwidth]{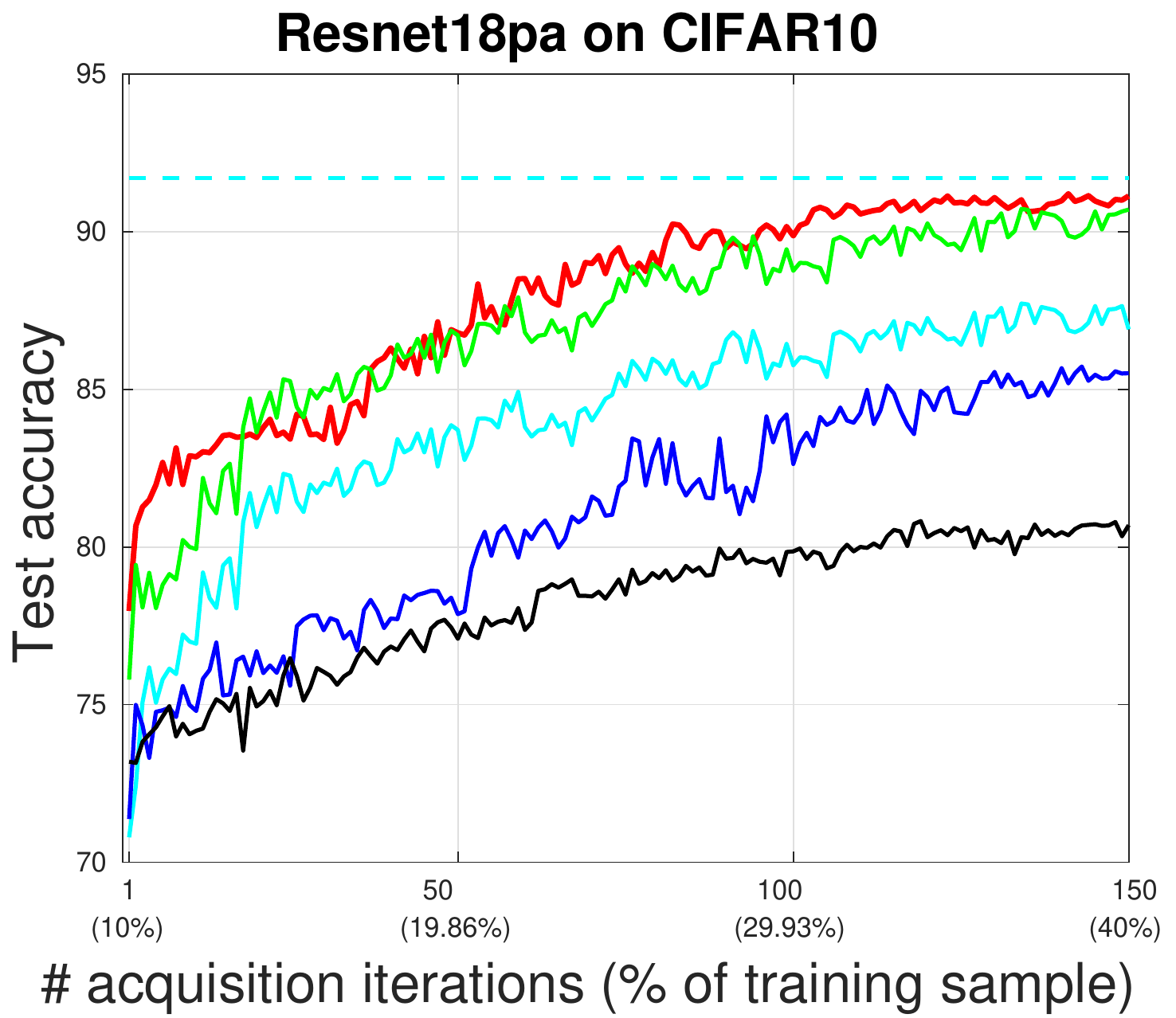}}
	\subfigure[CIFAR-100]{\includegraphics[width=0.24\textwidth]{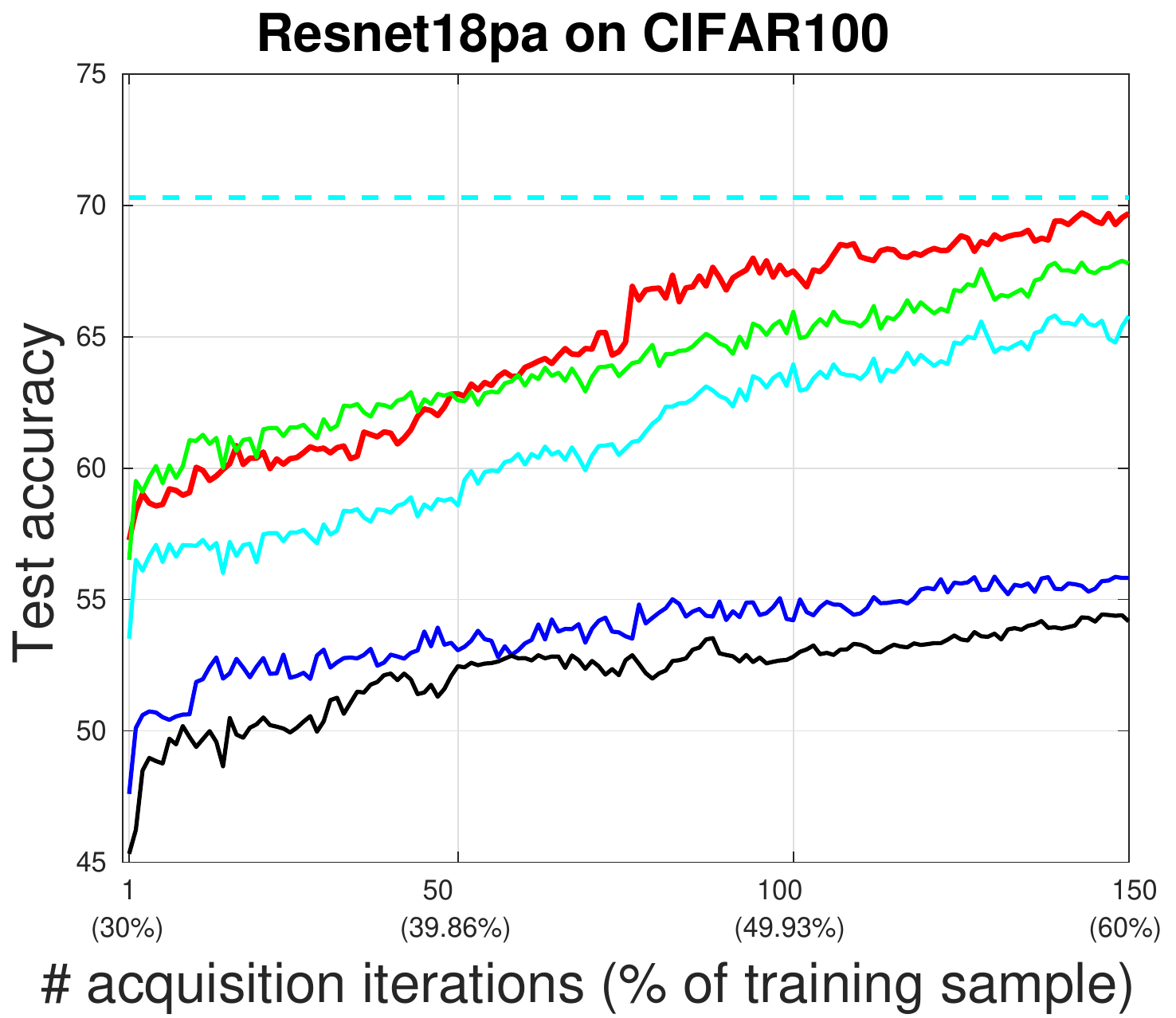}}
	\subfigure[SVHN]{\includegraphics[width=0.24\textwidth]{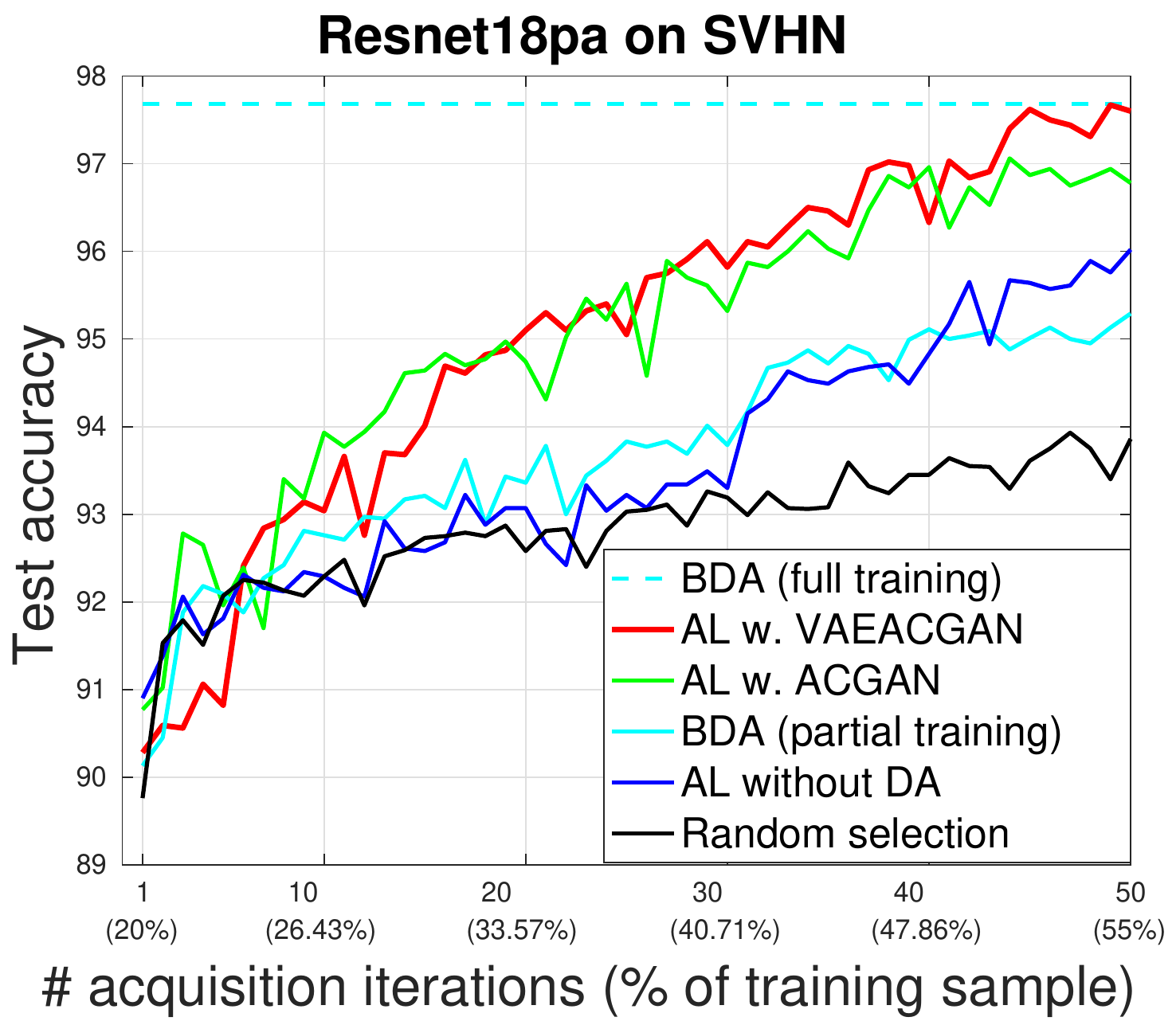}}
	\caption{Training and classification performance of the proposed Bayesian generative active learning (\emph{AL w. VAEACGAN}) compared to active learning using BDA~\cite{tran2017bayesian} (\emph{AL w. ACGAN}), BDA modeled with partial training sets (\emph{BDA (partial training)}), BALD~\cite{gal2017deep,houlsby2011bayesian} without data augmentation (\emph{AL without DA}), and random selection of training samples using the percentage of samples from the original training set  (\emph{Random selection}).  The result for BDA modeled with the full training set (\emph{BDA (full training)}) and $10 \times$ data augmentation represents an upper bound for all other methods.
		This performance is measured as a function of the number of acquisition iterations and respective percentage of samples from the original training set used for modeling.  First row shows these results using ResNet18~\cite{ResNet}, and second row shows ResNet18pa~\cite{he2016identity} on MNIST~\cite{lecun1998gradient} (column 1), CIFAR-10 (column 2) CIFAR-100~\cite{krizhevsky2012imagenet} (column 3), and SVHN~\cite{netzer2011reading} (column 4). }
	\label{fig:merged}
	\vskip -0.2in
\end{figure*}


Fig.~\ref{fig:merged} compares the classification performance of several methods as a function of the number of acquisition iterations and respective percentage of samples from the original training set used for modeling.  The methods compared are: 1) BDA~\cite{tran2017bayesian} modeled with the full training set (\emph{BDA (full training)}) and $10 \times$ data augmentation to be used as an upper bound for all other methods; 2) the proposed Bayesian generative active learning (\emph{AL w. VAEACGAN}); 3) active learning using BDA (\emph{AL w. ACGAN}); 4) BDA modeled with partial training sets (\emph{BDA (partial training)}); 5) BALD~\cite{gal2017deep,houlsby2011bayesian} without data augmentation (\emph{AL without DA}); and 6) random selection of training samples using the percentage of samples from the original training set (\emph{Random selection}).  Each point of the curves in Fig.~\ref{fig:merged} presents the result of one acquisition iteration, where each new point represents a growing percentage of the training set, as shown in the horizontal axis.  
In Fig.~\ref{fig:merged}, \emph{BDA (partial training)} relies on $2\times$ data augmentation, so it uses the same number of real and artificial training samples as \emph{AL w. VAEACGAN} and \emph{AL w. ACGAN} -- this enables a fair comparison between these methods.

\begin{table*}[th]
	\caption{Mean $\pm$ standard deviation of the classification accuracy on MNIST, CIFAR-10, and CIFAR-100 after 150 iterations over 3 runs}
	\label{tab:average_results}
	\vskip 0.15in
	\begin{center}
		\scalebox{0.76}{
			\begin{sc}
				\begin{tabular}{r c c c c c c}
					\toprule
					&\multicolumn{5}{c}{{MNIST}} \\
					& AL w. VAEACGAN & AL w. ACGAN & AL w. PMDA & AL without DA & BDA (partial training) & Random selection\\
					\midrule
					Resnet18 & $\mathbf{99.53 \pm 0.05} $  & $99.45 \pm 0.02$ & $99.37 \pm 0.15$ & $99.33 \pm 0.10$ & $99.33 \pm 0.04$ & $99.00 \pm 0.13$\\
					Resnet18pa & $\mathbf{99.68 \pm 0.08}$ & $99.57 \pm 0.07$ & $99.49 \pm 0.09$ & $99.35 \pm 0.11$ & $99.35 \pm 0.07$ & $99.20 \pm 0.12$\\
					\midrule
					&\multicolumn{5}{c}{{CIFAR-10}} \\
					Resnet18 & $\mathbf{87.63 \pm 0.11} $  & $86.80 \pm 0.45$ & $82.17 \pm 0.35$ & $79.72 \pm 0.19$ & $85.08 \pm 0.31$ & $77.29 \pm 0.23$\\
					Resnet18pa & $\mathbf{91.13 \pm 0.10}$ & $90.70 \pm 0.24$ & $87.70 \pm 0.39$ & $85.51 \pm 0.21$ & $86.90 \pm 0.27$ & $80.69 \pm 0.19$\\
					\midrule
					&\multicolumn{5}{c}{{CIFAR-100}} \\
					Resnet18 & $\mathbf{68.05 \pm 0.17} $  & $66.50 \pm 0.63$ & $55.24 \pm 0.57$ & $50.57 \pm 0.20$ & $65.76 \pm 0.40$ & $49.67 \pm 0.52$\\
					Resnet18pa & $\mathbf{69.69 \pm 0.13}$ & $67.79 \pm 0.76 $ & $59.67 \pm 0.60$ & $55.82 \pm 0.31$ & $65.79 \pm 0.51$ & $54.77 \pm 0.29$ \\
					
					\bottomrule
				\end{tabular}
			\end{sc}
		}
	\end{center}
	\vskip -0.1in
\end{table*}

To show a more informative comparison of our proposed approach (\emph{AL w. VAEACGAN}) with other methods presented in Fig.~\ref{fig:merged}, especially with \emph{AL w. ACGAN} and \emph{BDA (partial training)}, and active learning using PMDA (\emph{AL w. PMDA}), using Resnet18 and Resnet18pa on MNIST, CIFAR-10, and CIFAR-100, we ran the experiments three times (with different random initialisations) and show the final classification results (mean $\pm$ stdev) in Tab.~\ref{tab:average_results} (after 150 iterations).


\begin{figure}[th]
	\vskip 0.1in
	\centering
	\includegraphics[width=0.35\textwidth]{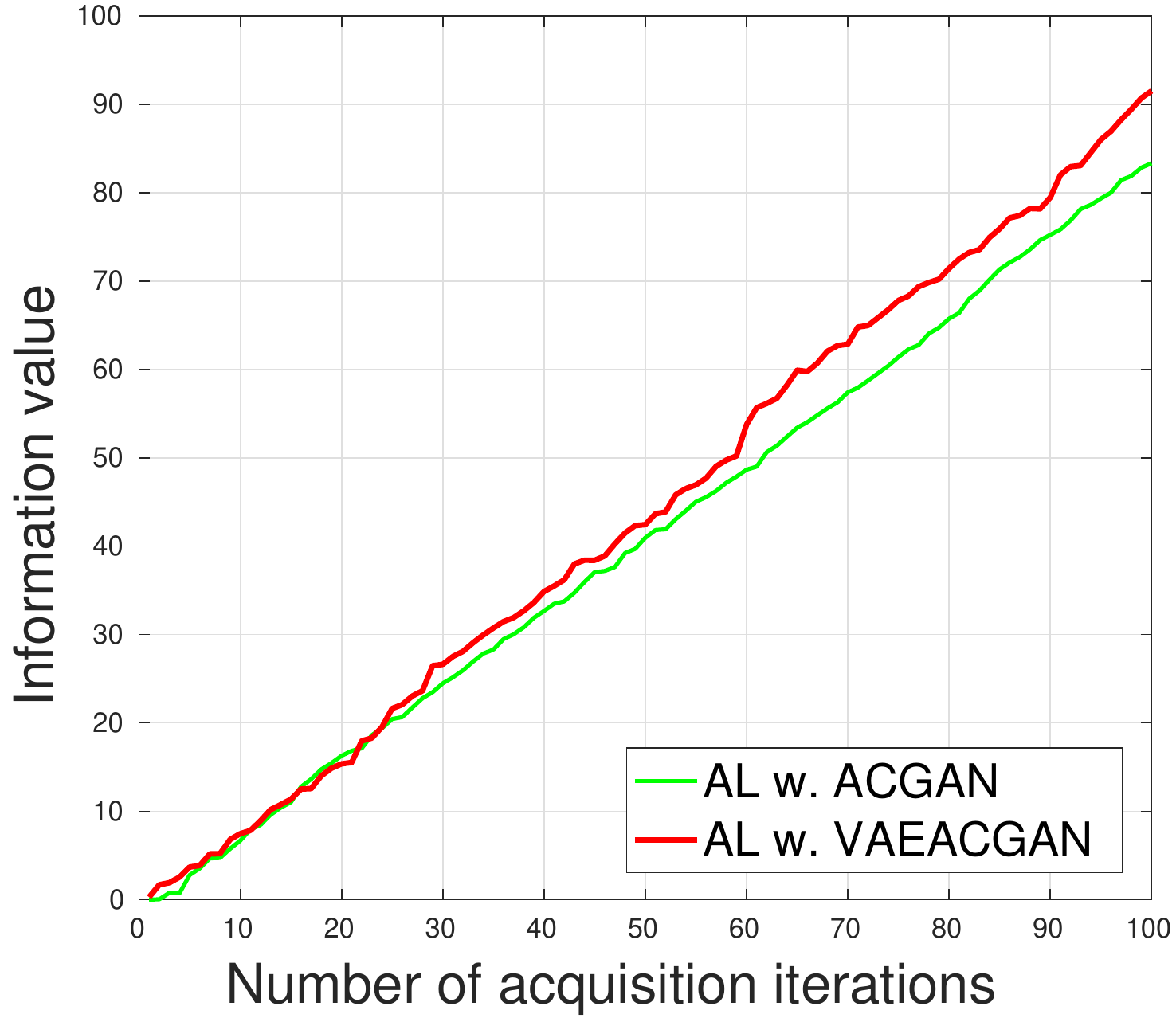}
	\caption{Average information value of samples measured by the acquisition function \eqref{acq_approx} of the samples generated by \emph{AL w. ACGAN} and \emph{AL w. VAEACGAN} using Resnet18 on CIFAR-100.}
	\label{fig:average_information}
	\vskip -0.1in
\end{figure}

We also compare the average information value of samples measured by the acquisition function \eqref{acq_approx} of the samples generated by \emph{AL w. ACGAN} and \emph{AL w. VAEACGAN} in Fig.~\ref{fig:average_information} using Resnet18 on CIFAR-100.

\begin{figure*}[th]
	\vskip 0.2in
	\centering
	\subfigure[MNIST]{\includegraphics[width=0.23\textwidth]{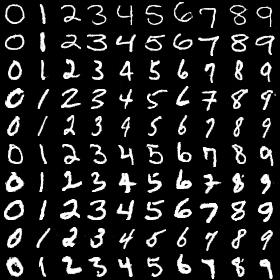}}
	\subfigure[CIFAR-10]{\includegraphics[width=0.23\textwidth]{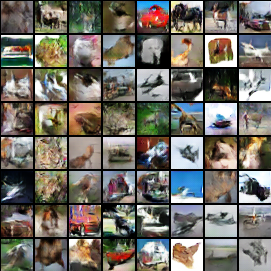}}
	\subfigure[CIFAR-100]{\includegraphics[width=0.23\textwidth]{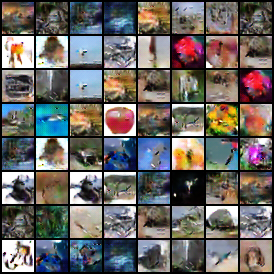}}
	\subfigure[SVHN]{\includegraphics[width=0.263\textwidth]{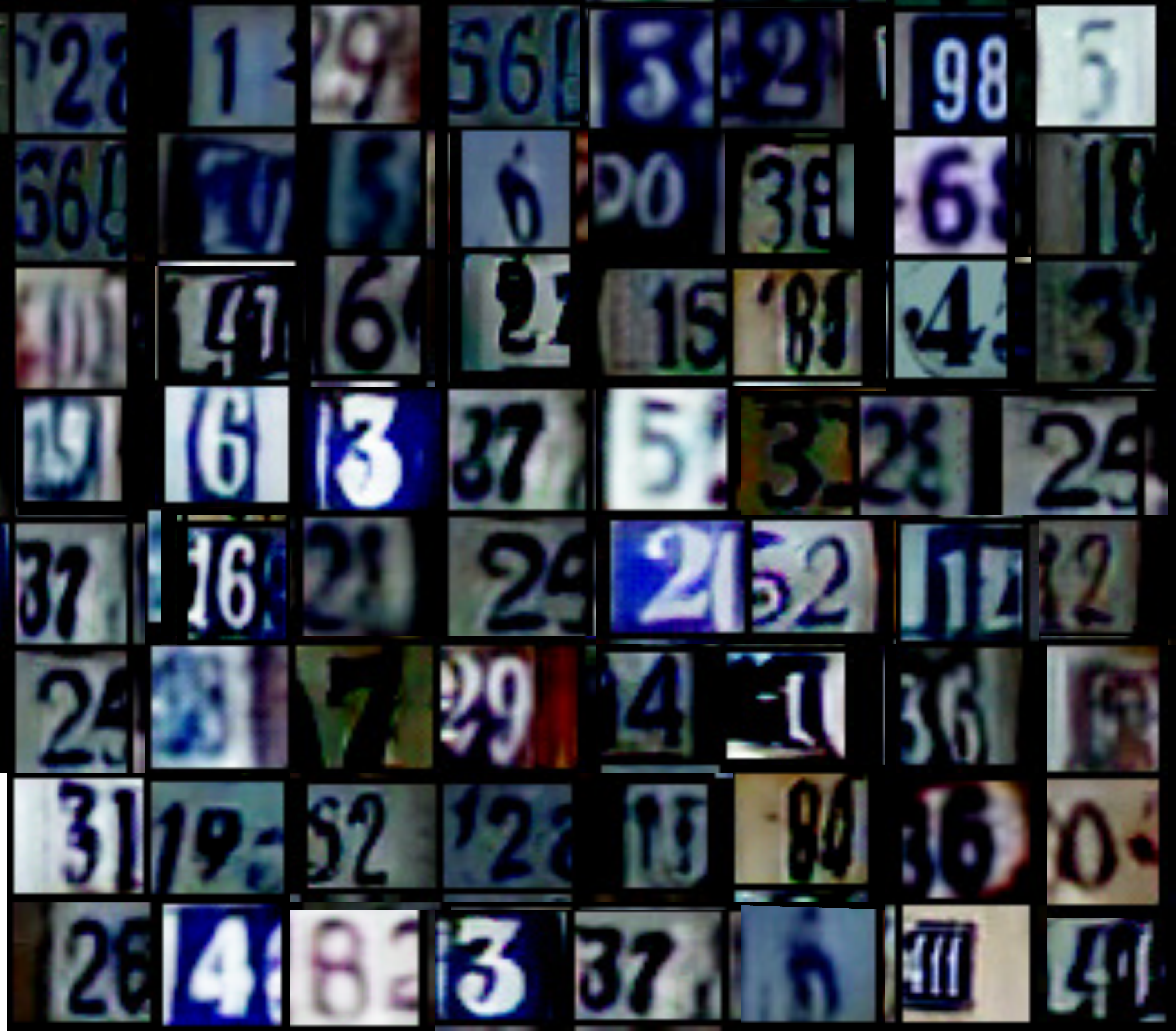}}
	\caption{Images generated by our proposed \emph{AL w. VAEACGAN} approach for each data set.}
	\label{fig:generation}
	\vskip -0.2in
\end{figure*}


Figure~\ref{fig:generation} displays images generated by our generative model for each data set.

\section{Discussion and Conclusions}\label{sec:conclusion}

Results in Fig.~\ref{fig:merged} consistently show (across different data sets and classification models) that our proposed Bayesian generative active learning (\emph{AL w. VAEACGAN}) is superior to active learning with BDA (\emph{AL w. ACGAN}), which is in fact an original model proposed by this paper.  Even though informative samples are used for training \emph{AL w. ACGAN}, the generated samples may not be informative, as depicted by Fig.~\ref{fig:average_information} which shows that samples generated by \emph{AL w. VAEACGAN} are more informative, particularly at latter stages of training.  Nevertheless, the samples generated by \emph{AL w. ACGAN} seem to be important for training given its better classification performance compared to \emph{AL without DA}.  
Table~\ref{tab:average_results} consistently shows that our proposed approach outperforms other methods on three data sets. In particular, the classification results by \emph{AL w. VAEACGAN} are statistically significant with respect to \emph{BDA (partial training)} on all those data sets, and with respect to \emph{AL w. ACGAN} on CIFAR-$\{10,100\}$ for both models (i.e., $p \leq .05$, two-sample t-test for ResNet18 and ResNet18pa).
Fig.~\ref{fig:merged} also shows that with a fraction of the training set, we are able to achieve a classification performance that is comparable with BDA using $10\times$ data augmentation over the entire training set -- this is evidence that the generation of informative training samples can use less human and computer resources for labeling the data set and training the model, respectively. When using MNIST and ResNet18, we let \emph{AL w. VAEACGAN} run until it reaches a competitive accuracy with BDA, which happened at 150 iterations -- this is then used as a stopping criterion for all methods. If we leave all models running for longer, both \emph{AL w. ACGAN} and \emph{AL w. VAEACGAN}  converge to \emph{BDA (full training)}, with \emph{AL w. VAEACGAN} converging faster.  Furthermore, results in Fig.~\ref{fig:merged} demonstrate that for training sets of similar sizes, our proposed \emph{AL w. VAEACGAN} produces better classification results than \emph{BDA (partial training)} for all experiments, re-enforcing the effectiveness of generating informative training samples. It can also be seen from Fig.~\ref{fig:merged} that, on MNIST, the active learning methods initially behave worse than random sampling, but after a certain number of training acquisition steps (around 75 steps and 13\% of the training set), they start to produce better results. Although the main goal of this work is the proposal of a better training process, the quality of the images generated, shown in Fig.~\ref{fig:generation}, is surprisingly high.

In this work we proposed a Bayesian generative active deep learning approach that consistently shows to be more effective than data augmentation and active learning in several classification problems. One possible weakness of our paper is the lack of a comparison with the only other method in the literature that proposes a similar approach~\cite{zhu2017generative}.  Although relevant to our approach, \cite{zhu2017generative} focuses on binary classification (that paper provides a brief discussion on the extension to multi-class, but does not show that extension explicitly), and the results shown in that paper are not competitive enough to be reported here. Note that our proposed approach is model-agnostic, it therefore can be combined with several currently introduced active learning methods such as~\cite{sener2018active,ducoffe2018adversarial,gissin2018discriminative}. 
In the future, we plan to investigate how to generate samples directly using complex acquisition functions, such as the one in \eqref{acq_approx}, instead of conditioning the sample generation on highly informative samples selected from the unlabeled data set. We also plan to work on the efficiency of our proposed method because its empirical computational cost is slightly higher than BDA~\cite{tran2017bayesian} and BALD~\cite{gal2017deep,houlsby2011bayesian}.

\section*{Acknowledgments}
We gratefully acknowledge the support by Vietnam International Education Development (VIED), Australian Research Council through grants DP180103232, CE140100016 and FL130100102.

\bibliography{mybib}
\bibliographystyle{icml2019}

\end{document}